\newtheorem{theorem}{Theorem}[section]
\newtheorem{definition}[theorem]{Definition}
\newcommand{\indep}{\perp \!\!\! \perp}
\title{Choosing with unknown causal information: Action-outcome probabilities for decision making can be grounded in causal models}
\author{Mauricio Gonzalez Soto$^{1,2}$ \and David Danks$^3$ \and Hugo J.  Escalante Balderas$^2$ \and Luis E. Sucar Succar$^2$}
\date{%
    $^1$FSG NeuroInformatics, Faculty of Computer Science, University of Vienna. \\[2ex]%
    $^2$Coordinaci\'on de Ciencias Computacionales, Instituto Nacional de Astrof\'isica \'Optica y Electr\'onica (INAOE), M\'exico\\[2ex]%
    $^3$Halicioglu Data Science Institute and Department of Philosophy, University of California, San Diego.\\[2ex]%
    \today
}
\begin{document}

\maketitle

\begin{abstract}%
Decision-making under uncertainty and causal thinking are fundamental aspects of intelligent reasoning. Decision-making has been well studied when the available information is considered at the associative (probabilistic) level. The classical Theorems of von Neumann-Morgenstern and Savage provide a formal criterion for rational choice using associative information: maximize expected utility. There is an ongoing debate around the origin of probabilities involved in such calculation. In this work, we will show how the probabilities for decision-making can be grounded in causal models by considering decision problems in which the available actions and consequences are causally connected. In this setting, actions are regarded as an intervention over a causal model. Then, we extend a previous causal decision-making result, which relies on a known causal model, to the case in which the causal mechanism that controls some environment is unknown to a rational decision-maker. In this way, action-outcome probabilities can be grounded in causal models in known and unknown cases. Finally, as an application, we extend the well-known concept of Nash Equilibrium to the case in which the players of a strategic game consider causal information.
\end{abstract}

\section{Introduction}
An important aspect of acting in the world is being able to make decisions under uncertain conditions: Which route should I use to get to work? Could there be traffic? If one wishes to avoid certain incoherent, inconsistent behavior, then the natural and most well-known idea is to balance between how desired an object is, and how available is it. Such balancing is to be done via the calculation of the expected utility \citep{bernoulli1954exposition,von1944theory,savage1954the}, where how desired an object is, is measured in terms of the \textit{utility} it produces; and how available is it is encoded in terms of probabilities. Such probabilities are required to either be objectively known or subjectively (\textit{internally}) defined by a decision maker. There is no standard agreement in the decision-making literature on where such probabilities must come from. 

Acting in the world is conceived by human beings as \textit{causally intervening} on it, and it is known that humans are also able to learn and use causal relations while making choices \citep{lagnado2007beyond,hagmayer2007causal,hagmayer2008causal,hagmayer2009decision,hagmayer2013repeated,hagmayer2017causality}. Thinking in terms of causes and effects is an everyday task and in fact, \textit{causal reasoning} is to be found at the very core of our minds since we constantly ask \textit{why}: Why do we get sick? Why does a drug work? \citep{tversky1977causal,spirtes2000causation,waldmann2013causal,danks2014unifying,lake2017building,pearl2018why,neil2019causality}.

If, as argued, another important aspect of acting in the world is making choices, then taking causal information as a basis is fundamental for decision-making. Therefore, it is a natural question \textit{how to formalize decision making when causal information (known or unknown) is present?} Answering the such question is relevant given the importance of causal relations as well as reasoning in everyday life and in science \citep{spirtes2000causation,pearl2018why}. Given that human beings actually use causal information while making choices \citep{tversky1980causal}, and the importance of decision-making results based on associative information, it is desirable to have an explicit and computationally implementable criterion for decision-making, which also addresses the question of the origin of probabilities for decision making in a sensitive, reasonable way.

The previous question has been already considered by \cite{nozick1969newcomb,lewis1981causal,joyce1999foundations,eells2016rational, stern2017interventionist} as well as by \cite{pearl2009causality} who provides an optimality criterion for decision making under causal-controlled uncertainty \textit{when the causal mechanism which controls the environment is known} by the decision maker \citep{pearl2009causality}.

In this paper, we will provide a decision-making criterion for \textit{unknown} causal information, and such criterion has the form of maximization of expected utility; thus showing that action-outcome probabilities for decision-making can be grounded in causal models.

The remainder of the paper goes as follows: In Section \ref{Classical_Decision_Making} we review the basics of Classical Decision Making: first we give some basic definitions; then we review the decision making results from von Neumann-Morgenstern and Savage,  where we discuss the General Expected Utility Criterion, and finally we discuss about the problem around the origin of the action-outcome probabilities. In Section \ref{Causation}, the basics of Causation and Causal Graphical Models, as well as the basic definitions required for Causal Decision Making. In Section \ref{causal_decision_making} we describe some fundamental previous results for Causal Decision Making, and in particular our main result in Section \ref{main_result}, finally in Section \ref{causal_games} we describe an application into the domain of Game Theory, where we are able to define a Causal Nash Equilibrium.

\section{Classical Decision Making}{\label{Classical_Decision_Making}}
A Decision Problem under Uncertainty is the mathematical model of a situation in which an agent must choose one out of many available actions with uncertain consequences which depend on different, possibly unknown, factors \citep{bernardo2000bayesian,robert2007bayesian,gilboa2009decision}. Such consequences are assumed to produce \textit{satisfaction} in the decision maker, and this satisfaction is represented by a \textit{preference relation}, defined over actions, and denoted by $\succeq$, where $a_1 \succeq a_2$ is read as $a_1$ being preferred to $a_2$, in terms of the consequences that would be obtained by choosing action $a_1$ instead of action $a_2$, if these were the only two available options and given the current knowledge the agent has. The agent is assumed to be \textit{rational}; that is, it is assumed that the preference relation satisfies the so-called rationality or coherence axioms described in Appendix \ref{appendixA}.

\citeauthor{von1944theory} gave an answer for how to make choices if rational preferences are assumed, utilities are unknown, and the stochastic relation (i.e., probabilities of events) between actions and outcomes are considered as objective and given to the decision maker: maximize expected utility with respect to a utility function whose existence is guaranteed \citep{von1944theory}.

If probabilities are not known, then \citeauthor{savage1954the} showed that a rational decision maker must choose \textit{as if} she is maximizing the expected utility using a \textit{subjective} probability distribution \citep{savage1954the}. 

Such theorems provide a formal criterion for associative, also called evidential, decision-making if rationality is assumed: maximize expected utility.

Other decision-making theories exist, such as Kahneman and Tversky's Prospect Theory \citep{kahneman1979prospect}, Gilboa's Case-Based Decision Theory \citep{gilboa1995case}, among others that are out of the scope of this work, since such theories defy the \textit{classical} notion of rationality, which is the one we are adopting in this work.

In Section \ref{related_work} we make reference to another theory, Joyce's Causal Decision Making \citep{joyce1999foundations} as well as Pearl's criterion for known causal information. For further details on classical (non-causal) decision making, see \cite{bernardo2000bayesian} and \cite{gilboa2009decision}. For further applications of Decision Making in other fields, such as Quantum Mechanics, see \cite{wallace2012emergent,berkovitz2012world}.

\subsection{Preliminary Definitions}
In this Section, we formally state the definitions required for a precise formulation of what follows. The notation and setting we use here is the one used by \cite{bernardo2000bayesian}. In Definitions \ref{def_UE} and  \ref{def_DPU}, we consider a non-empty set $\Omega$ and a countable set $\mathcal{A}$ of available actions; for each action $a_i \in \mathcal{A}$, a partition $(E_j)_{j \in J(i)}$ of $\Omega$ and a set of consequences $(C_j)_{j \in J(i)}$. Let $\mathcal{E}$ be the $\sigma$-algebra generated by the union of every $(E_j)_{j \in J(i)}$, and let $\mathcal{C}$ the union of all $(C_j)_{j \in J(i)}$ over $i$. \cite{bernardo2000bayesian} derives the existence of a subjective probability measure from a set of \textit{coherence axioms}, according to which a decision maker has some mechanism of quantifying uncertainty in terms of real numbers within the $[0,1]$ interval. \\
\\
\textbf{Note:} In the context of Chapter 2 of 
\cite{bernardo2000bayesian}, from a preference relation $\succeq$ defined over actions of the form $\{c_{ij} \mid E_{ij}, j \in J(i) \} $, and the coherence axioms stated in the Appendix, further relationships $\succeq$, $\succ$ as well as $\sim$ can be defined both over consequences, as well as \textit{over events}, where $E \succeq F$ is to be read as considering $E$ more likely than $F$, as well as $E \succeq_G F$, which is the \textit{conditional} likelihood relation, but such development is beyond the scope of this work. Having said this, we can reduce the notation $a_i = \{ c_{ij} \mid E_{ij}, \textrm{ } j \in J(i) \}$ to simply $\{ c_j \mid j \in J \}.$ But it must be kept in mind that the index set $J$ depends on $i$ as well as $c_j$ and each $E_j$. We note that each $a_i = \{ c_{ij} \mid E_{ij}, \textrm{ } j \in J(i) \}$ links, or maps, a \textit{partition} of uncertain events $\{ E_j : j \in J \}$ to a corresponding set of consequences $\{ c_j : j \in J \}. $ Thus bridging this notation to the original formulation of Savage's Theorem, where the objects of choice are functions from states, which are defined by \cite{gilboa2009decision} as \textit{...an exhaustive list of all scenarios that might unfold...}, to outcomes \citep{bernardo2000bayesian,gilboa2009decision}.\\
\\
We will now first consider the general setting for a decision problem: an uncertain environment.

\label{definitions}
\begin{definition}
\label{def_UE}
Let $\Omega$ be a non-empty set. An \textbf{uncertain environment} is a tuple $(\Omega, \mathcal{A},\mathcal{C},\mathcal{E})$. Where $\mathcal{A}$ is a non-empty set of available actions, $\mathcal{C}$ a set of consequences, and $\mathcal{E}$ is a $\sigma$-algebra of events over $\Omega$. 
\end{definition}

If we consider \textit{the preferences} of some decision maker over the set of actions of some uncertain environment, then we have a Decision Problem under Uncertainty \citep{bernardo2000bayesian}.

\begin{definition}
\label{def_DPU}
A \textbf{Decision Problem under Uncertainty} is an uncertain environment $(\Omega, \mathcal{A},\mathcal{C},\mathcal{E})$ endowed with a preference relation $\succeq$ over $\mathcal{A}$. 
\end{definition}
As previously mentioned, we can identify, as done in \cite{bernardo2000bayesian}, consequences as actions by simply writing $c = \{ c \mid \Omega \}$ and say that consequence $c_1$ is preferred over consequence $c_2$, denoted as $c_1 \succeq c_2$, if action $\{ c_1 \mid \Omega \}$ is preferred over action $\{ c_2 \mid \Omega \}$.
\begin{definition}
A Decision Problem under Uncertainty is said to be \textbf{bounded} if there exists a pair of consequences $c_\ast$ and $c^\ast$ such that for every $c \in \mathcal{C}$, $c^{\ast} \succeq c \succeq c_\ast$.
\end{definition}

\begin{definition}
A Decision Problem under Uncertainty is said to be \textbf{finite} if the set $\mathcal{A}$ of available actions is finite.
\end{definition}

\subsection{Objective Probabilities: von Neumann-Morgenstern Theory}
\label{vNM-M}
Consider the following scenario: throwing a die. In this scenario, using DeFinetti's Representation Theorem \citep{finetti1992foresight,schervish1995theory}, we can model such throws as independent realizations of a random variable that has a known fixed law. We can think of such a law as being \textit{objective}. The \citeauthor{von1944theory} (vNM) Theorem  considers a scenario of \textit{decision under risk} with rational preferences; this is, rationally choosing between alternatives with uncertain outcomes with known, objective, probabilities. 

Formally, using the notation by \cite{gilboa2009decision}, we consider a set $X$ of available options. Let $L$ be the set of lotteries with finite support over $X$. The objects of choice are elements $l \in L$, which are known to the decision maker; we represent the decision maker's preferences by a preference relation $\succeq \subseteq L \times L$ that  satisfies being complete, transitive, continuous, and a condition called \textit{independence}. This family of conditions is called \textit{von Neumann-Morgenstern rationality axioms} as described by \cite{gilboa2009decision} and \cite{schervish1995theory}.

\begin{theorem}[von Neumann-Morgenstern]{\label{vNM}}
A preference relation $\succeq \subseteq L \times L$ where $L$ is a set of lotteries with finite support over a set $X$ satisfies the von Neumann-Morgenstern rationality axioms if and only if there exists a function $u: X \to \mathbb{R}$ such that for every $P, Q \in L$ we have that
\begin{equation}
P \succeq Q \textrm{ if and only if } \sum_{x \in X} P(x) u(x) \geq \sum_{x \in X} Q(x) u(x). 
\end{equation}
\end{theorem}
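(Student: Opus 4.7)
The plan is to prove the two directions separately. The easy direction assumes the representation by $u$: completeness and transitivity of $\succeq$ are inherited from the total order on $\mathbb{R}$; continuity holds because, for fixed $P,Q,R \in L$, the map $\alpha \mapsto \sum_{x}(\alpha P + (1-\alpha)R)(x)\, u(x)$ is affine in $\alpha$, so preimages of closed half-lines are closed; and independence follows from the linearity of expectation, since $\alpha P + (1-\alpha) R \succeq \alpha Q + (1-\alpha) R$ unfolds, via the defining inequality, to $\alpha \sum_x P(x)u(x) \geq \alpha \sum_x Q(x) u(x)$, which for $\alpha > 0$ is equivalent to $P \succeq Q$.

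For the substantive direction, I would first fix an arbitrary finite $Y \subseteq X$ and work on the sub-simplex $L_Y$ of lotteries supported in $Y$; since every $P \in L$ has finite support, it lies in some $L_Y$. Completeness and transitivity pick out a best element $b$ and a worst element $w$ in $Y$, and an inductive use of independence extends this to best/worst lotteries, giving $b \succeq l \succeq w$ for every $l \in L_Y$. If $b \sim w$, all lotteries in $L_Y$ are indifferent and I set $u \equiv 0$ on $Y$; otherwise $b \succ w$ and I invoke continuity plus independence to produce a unique $\alpha_l \in [0,1]$ with $l \sim \alpha_l\, b + (1-\alpha_l)\, w$, and define $u(l) := \alpha_l$.

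The core step is showing that $u$ is affine on $L_Y$: $u(\lambda P + (1-\lambda) Q) = \lambda u(P) + (1-\lambda) u(Q)$ for all $P,Q \in L_Y$ and $\lambda \in [0,1]$. Two applications of independence let me replace $P$ and $Q$ inside the mixture by their calibrating $b/w$ mixtures, reducing $\lambda P + (1-\lambda) Q$ to the lottery $[\lambda u(P) + (1-\lambda) u(Q)]\, b + [1 - \lambda u(P) - (1-\lambda) u(Q)]\, w$; uniqueness of the calibration $\alpha_{(\cdot)}$ then forces the desired identity. A straightforward induction on $|\mathrm{supp}(l)|$, writing $l$ as a mixture of $\delta_y$'s and using affinity, upgrades this to $u(l) = \sum_{y \in Y} l(y)\, u(y)$, and by construction $P \succeq Q \iff \alpha_P \geq \alpha_Q \iff u(P) \geq u(Q)$.

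Finally, to glue the $Y$-local constructions into one $u : X \to \mathbb{R}$, I would fix once and for all a pair $x_0 \succ x_1$ in $X$ (if no strict preference exists anywhere in $X$, every lottery is indifferent and $u \equiv 0$ works), and normalize each local representation by a positive affine transformation so that $u(x_0) = 1$ and $u(x_1) = 0$; uniqueness of the von Neumann-Morgenstern representation up to positive affine transformations guarantees the pieces agree on overlaps. The main obstacle is the affinity step, because that is the one place where all four axioms must cooperate: independence supplies the algebraic manipulation, continuity produces the calibrating $\alpha_l$, and completeness together with transitivity let one compare the two candidate calibrations and conclude they coincide.
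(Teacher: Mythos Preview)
Your proof is correct and follows the standard textbook argument for the von Neumann--Morgenstern theorem. However, the paper does not actually prove this statement: after stating the theorem it simply writes ``See \cite{gilboa2009decision} for details on the proof'' and moves on. There is therefore no in-paper proof to compare against; your argument is essentially the one Gilboa gives in the cited reference (calibrate each lottery against a best/worst pair via continuity and independence, establish affinity of the calibration map, then patch across finite supports by affine renormalization), so in that sense you are aligned with the source the paper defers to.
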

The theorem states that \textbf{if a rational decision maker knows the probabilities of obtaining a certain outcome}, then she must choose \textit{as if} maximizing the expected value of some function $u$ whose existence is guaranteed by Theorem \ref{vNM}. See \cite{gilboa2009decision} for details on the proof.

\subsection{Subjective Probabilities: Savage}
\label{savage_theorem}
Thanks to DeFinetti's Theorem, we can think of objective probabilities as long-term frequencies, but the question remains open regarding inquiries of the type: \textit{what is the probability that John Doe was born on May 1st?} It is not the case that John Doe sometimes is born one day and sometimes born again some other day. In this context, we may talk of probabilities as specifying \textit{beliefs} an agent has over some the occurrence or not of some event. Savage's Theorem studies decision-making in this context; this is, if a rational decision-maker is uncertain about the probabilities of obtaining certain outcomes and does not have a precise, objective quantification of her preferences (utility function), then it is Savage's Theorem \citep{savage1954the}, which gives a formal decision criterion. 

Savage's result extends von Neumann-Morgenstern Theorem since it considers the case in which a rational decision maker knows neither her utility function nor the probabilities to be used in order to obtain the expected values required for making choices according to the von Neumann-Morgenstern Theorem \citep{gilboa2009decision}. 

\subsubsection{General Expected Utility Principle (Savage)}
We now state the general Expected Utility Principle as found in \citep{bernardo2000bayesian}. Further details can be found in \citep{savage1954the,anscombe1963definition,kreps1988choice,schervish1995theory,robert2007bayesian,gilboa2009decision}.

\begin{theorem}[Expected Utility Principle, \cite{savage1954the,bernardo2000bayesian}]{\label{savage}}
In a finite, bounded Decision Problem under Uncertainty $(\mathcal{A}, \mathcal{C}, \mathcal{E}, \succeq)$, the preference relation $\succeq$ satisfies the coherence rationality axioms if and only if there exists: A \textit{probability measure} $P$, called a \textit{subjective probability}, that associates with each uncertain event $E \in \mathcal{E}$ a real number $P(E)$ and a utility function $u : \mathcal{C} \to \mathbb{R}$ such that it associates each consequence with a real number $u(c)$. Such that for $a_1$ and $a_2$ actions in $\mathcal{A}$, and any $G \neq \emptyset$
\begin{eqnarray*}
 &a_1 \succeq_G a_2&\\
 & \textrm{ if and only if }&\\
 & \sum_{j \in J(a_1)} u(c_j) P(E_j) \geq \sum_{j \in J(a_2)} u(c_j) P(E_j) &
\end{eqnarray*}
\end{theorem}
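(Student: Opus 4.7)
The plan is to follow the constructive route used in \citeauthor{bernardo2000bayesian}~\cite{bernardo2000bayesian}: from the axioms I would first build a subjective probability $P$ on $\mathcal{E}$, then a utility $u$ on $\mathcal{C}$, and finally verify that preference between actions coincides with the comparison of expected utilities. The reverse direction (existence of $P$ and $u$ representing $\succeq$ implies the axioms) is a routine check: each of Axioms 1--5 translates into an algebraic statement about weighted sums that follows from the non-negativity and finite additivity of $P$ together with the monotonicity of $\geq$ on $\mathbb{R}$, so I would dispatch it by going through the list.

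For the forward direction I would first use Axiom 4 to single out the sub-algebra $\mathcal{S}$ of standard events, together with its measure $\mu:\mathcal{S}\to[0,1]$ that represents the likelihood ordering on $\mathcal{S}$ (clauses 4.1 and 4.2). Axiom 5.2 (precise measurement of events) then lets me extend $\mu$ to all of $\mathcal{E}$: for each $E\in\mathcal{E}$ pick a standard event $S_E\in\mathcal{S}$ with $E\sim S_E$ and set $P(E):=\mu(S_E)$. Well-definedness is obtained from transitivity of $\sim$ (Axiom 2) together with clause 4.1, and finite additivity from clause 4.2 after reducing to disjoint events via the consistency of preferences (Axiom 3.2). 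The independence clauses 4.3--4.5 will be held in reserve for the expected-utility step.

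Next I would construct $u$ using the boundedness hypothesis together with Axiom 5.1. Fix $c_\ast$ and $c^\ast$ with $c^\ast \succeq c \succeq c_\ast$ for every $c\in\mathcal{C}$; by Axiom 5.1 there exists a standard event $S_c\in\mathcal{S}$ such that $c \sim \{c^\ast \mid S_c,\; c_\ast \mid S_c^c\}$, and I define $u(c):=\mu(S_c)$. Axioms 2 and 3.2 together force $c \mapsto u(c)$ to be monotone with respect to $\succeq$ on $\mathcal{C}$, and the range of $u$ lies in $[0,1]$ with $u(c_\ast)=0$, $u(c^\ast)=1$ (any positive affine rescaling being permissible).

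The main obstacle is the final step: showing that, for finite actions $a_1 = \{c_j \mid E_j,\; j \in J(a_1)\}$ and $a_2 = \{c_k \mid E_k,\; k \in J(a_2)\}$ and any non-null $G$, one has $a_1 \succeq_G a_2$ if and only if $\sum_j u(c_j) P(E_j) \geq \sum_k u(c_k) P(E_k)$. My strategy is induction on $|J(a)|$. The base case $|J|=2$ reduces, via Axiom 5.2 and Axiom 4.1, to comparing dichotomised options of the form $\{c^\ast \mid S,\; c_\ast \mid S^c\}$ with $S\in\mathcal{S}$, which by Axiom 3.2 and the definition of $\mu$ is equivalent to comparing the numbers $\mu(S)$; expanding these numbers recovers exactly $u(c_j)P(E_j)+u(c_k)P(E_k)$ after using the independence clause 4.4 to factor probabilities of intersections. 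For the inductive step, I would split a $k$-outcome action into a dichotomy between one consequence and a ``sub-action'' supported on the complementary event, invoking Axiom 4.5 and Axiom 3.1 to move between conditional preferences and unconditional ones without disturbing the sums. The delicate bookkeeping lies in systematically replacing each $E_j$ by a standard event equivalent to it while preserving independence of the replacements, so that the product structure required by clause 4.4 keeps holding through the induction; once that bookkeeping is controlled, the biconditional in the theorem becomes the two directions of the correspondence $P(E)=\mu(S_E)$, $u(c)=\mu(S_c)$ composed with the monotonicity of $\mu$.
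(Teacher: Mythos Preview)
The paper does not give its own proof of the classical Savage theorem; it states the result and defers to \cite{bernardo2000bayesian} and \cite{gilboa2009decision}. However, the Bernardo--Smith argument is recapitulated inside the paper as the skeleton of the proof of Theorem~\ref{causal_savage}, so that is the natural point of comparison.

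Your construction of $P$ (via Axiom~5.2) and of $u$ (via Axiom~5.1 together with boundedness) matches that route exactly. The divergence is in the final step. You propose induction on $|J(a)|$, reducing a $k$-outcome action to a dichotomy between one consequence and a sub-action on the complementary event, and you flag as delicate the bookkeeping needed to keep the replaced standard events mutually independent through the recursion. The Bernardo--Smith argument the paper follows avoids induction altogether: for each action $a_i=\{c_{ij}\mid E_{ij}\}$ one picks standard events $S_{ij}$ with $c_{ij}\sim\{c^\ast\mid S_{ij},\,c_\ast\mid S_{ij}^c\}$ and $S_{ij}\indep E_{ij}$ (Axioms~4.3 and~5.2), forms the single event $A_i=\bigcup_j(E_{ij}\cap S_{ij})$, and shows in one stroke that $a_1\succeq a_2 \Leftrightarrow P(A_1)\ge P(A_2)$. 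Since the $E_{ij}$ partition and each $S_{ij}\indep E_{ij}$, one gets $P(A_i)=\sum_j P(E_{ij})P(S_{ij})=\sum_j u(c_{ij})P(E_{ij})$ directly. This sidesteps the recursive independence-tracking you worry about: the only independence needed is between each $E_{ij}$ and its own $S_{ij}$, secured once and for all. Your inductive route is not incorrect in principle, but it does more work than necessary, and the ``delicate bookkeeping'' you anticipate is precisely what the single-event construction is designed to eliminate.
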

This theorem states that if a rational decision maker does not know the precise probabilities of outcomes given that a certain action has been taken, then she must choose \textit{as if} having in mind a probability assignment to the uncertainties in her environment and use such probabilities to calculate the expected utility with respect to a subjective utility function that represents her preferences. This result also gives a precise definition of \textit{subjective probability} as a quantification of uncertainty which is used to make good decisions \citep{gilboa2009decision}. See \citep{hens1992note,bernardo2000bayesian}, and \citep{gilboa2009decision} for further details. See \citep{ellsberg1961risk,tversky1975critique,tversky1989rational,binmore2008rational,gilboa2009always} and \citep{cite-key} for critiques of the coherence axioms.

\subsection{Origins of outcome-action probabilities}
The previous theorems, both vNM and Savage's, have the limitation of being stated only in terms of associative information (probabilities), which leaves open the question of the origin of such probabilities. We know that vNM provides utilities given probabilities, and Savage provides both utilities and probabilities given the coherence axioms, but what do such probabilities mean and what is their origin?

Interpreting the action-outcome conditional probabilities, either as causal or observational, is exactly the issue at the heart of the debate between causal and evidential decision theorists, see \cite{joyce1999foundations}. It is recognized in the literature that the source of these conditional probabilities should be explained \citep{peterson2017introduction}, but there is substantial debate about their source, see for example \cite{binmore2008rational,gilboa2009decision,eells2016rational}. We need a further justification to show that the right action-outcome conditional probabilities can be grounded in causal models. The objective of this paper is to provide that basis. In the next section we state some basic facts on causality and why $P(\cdot \mid x)$ is different from $P(\cdot \mid do(x))$ for some value $x$.

\section{Causation}{\label{Causation}}
The concept of Causality deals with regularities found in a given environment (context) that are stronger than probabilistic (or associative) relations in the sense that a causal relation allows for evaluating a change in the \textit{consequence} given a change in the \textit{cause}. This is known as an \textit{intervention}, which is different from observing and consists of a change in the joint distribution of the variables, which is performed by forcing the value of some variable to a specific value. Causal reasoning is able to deal with changes in the data-generating distributions, while observational reasoning does not. In a non-causal world, patients would avoid going to the doctor in order to avoid being sick \citep{spirtes2000causation,pearl2009causality,koller2009probabilistic,pearl2018why}.

In this work, the \textit{manipulationist} interpretation of Causality is adopted \citep{woodward2005making}. The main paradigm is clearly expressed by \citeauthor{campbell1979quasi} as \textit{manipulation of a cause will result in a manipulation of the effect} \citep{campbell1979quasi}. Consider the following example from \citeauthor{woodward2005making}: manually forcing a barometer to go down won't cause a storm, whereas the occurrence of a storm will cause the barometer to go down \citep{woodward2005making}. 

Here, we take the formal definition of Probabilistic Causality given by \citeauthor{spirtes2000causation} as a working definition for the notion of Causation. Similar descriptions of the manipulationist approach were described by \cite{holland1986statistics}. 
Causal inference tools, such as Pearl's do-calculus allow finding the effect of an intervention in terms of probabilistic information when certain conditions are met \citep{pearl2009causality}.

\subsection{A definition of Causality}
Causality is a \textit{stochastic} relation between \textit{events} within a probability space; this is, some event (or events) \textit{causes} another event to occur, \citep{spirtes2000causation}. 

\begin{definition}{\label{causal_relation}}
Let $(\Omega, \mathcal{F}, \mathbb{P})$ be a probability space, and consider a binary relation $\to \subseteq \mathcal{F} \times \mathcal{F}$ which is: Transitive: If $A \to B$ and $B \to C$ for any $A, B, C \in \mathcal{F}$ then $A \to C$. Irreflexive: For all $A \in \mathcal{F}$ it doesn't hold that $A \to A$. Antisymmetric: For $A,B \in \mathcal{F}$ such that $A \neq B$ if $A \to B$ then it doesn't hold that $B \to A$.
\end{definition}

We consider an extra pair of conditions. The first one, known as Causal Sufficiency, is about the \textit{nature} of the model: for any variables $X, Y$ in the model $\mathcal{G}$, there are no common causes of $X, Y$ \textit{outside} of the model $\mathcal{G}$ \citep{spirtes2000causation,pearl2009causality,sucar2015probabilistic}.\footnote{Alternately, if noise terms are specified in $\mathcal{G}$ to capture unmeasured causes, then Causal Sufficiency can be written as “there are no unmodeled causes of $X$.”} Common causes that are uncaused by other factors in $\mathcal{G}$ are the usual emphasis with Causal Sufficiency, but it also excludes some unobserved \textit{intermediate} events (e.g., if $A \rightarrow B \rightarrow X$ and $B \rightarrow Y$, then $B$ must be observed). Causal Sufficiency thus implies that the causal connections in $\mathcal{G}$ do not share unmodeled mechanisms \citep{spirtes2000causation}. 

The second required condition is that an intervention on a particular variable or event $T$ will change the value of $T$ (and so can “break” the causal influences on $T$), but do not otherwise affect the causal mechanisms in $\mathcal{G}$. This assumption, called \textit{Invariance} by \cite{woodward2005making}, more specifically implies that $T$ still has the same effects as before, even though the joint distribution of the variables is reconfigured \citep{woodward2005making}. 

These conditions are required in an axiomatic fashion, so we do not discuss them further here.

\subsection{Representation into a Directed Acyclic Graph}
The causal relations, defined between events, contained in $\to$ can be summarized in a graph $G=(V,E)$ in the following way: If $A \to B$ then the graph must contain a node $A \in V$ representing $A$, a node  $B \in V$ representing $B$ and a directed edge $e \in E$ connecting the respective nodes in the direction of the causal relation.


Notice that since the graph is finite, there exist some nodes that do not have causes, which are called \textit{exogenous}. If an event $A$ is caused by some other event, then we say it is \textit{endogenous} and we denote the set of its causes as $Pa(A)$. It is proven by \citeauthor{kiiveri1984recursive} that at least one exogenous node exists in a causal graph \citep{kiiveri1984recursive}.

\subsection{Causal Graphical Models}
A Causal Graphical Model (CGM) consists of a set of random variables $\mathcal{X}=\{ X_1,...,X_n \}$, and a Directed Acyclic Graph (DAG) $\mathcal{G}$ whose nodes are in correspondence with the variables in $\mathcal{X}$ and whose edges represent relations of cause-effect in the sense that their realizations correspond to the events encoded by the causal relation\citep{koller2009probabilistic,sucar2015probabilistic}. Also, the model is enriched with an operator called $do()$ which is a functional defined over graphs, and whose action is described as follows: given $\mathbf{X} \subseteq \mathcal{X}$ and $\mathbf{x} = \{ x_{i_1}, x_{i_2}, ... , x_{i_j} \}$ an element of the set of all possible values of the variables belonging to $\mathcal{X}$, $Val(\mathcal{X})$ the action $do(\mathbf{X} = \mathbf{x} )$ corresponds to assigning to each $X_j \in \mathbf{X}$ the value $x_{i_j}$ and to delete every incoming edge into the node corresponding to each $X_j$ in the graph $\mathcal{G}$.
In the context of CGMs, to apply the $do()$ operator over a variable (or set of variables) is called as an \textit{intervention} over the variable. It is this interventional operator which separates associative models from causal models \citep{pearl2009causality,koller2009probabilistic,sucar2015probabilistic}.

It is required that the probability distribution that results from an intervention over a variable is Markov compatible with the graph; this is, the resulting interventional distribution is equivalent to the product of the conditional probability of every variable given its parents in the intervened graph \citep{sucar2015probabilistic}.

\subsection{Causal Environments and Causal Decision Problems}{\label{causal_problems}}
We consider decision-making using causal information. In this section we define a Causal Environment to be an \textit{uncertain environment}, as defined in Section \ref{definitions}, with the extra condition that there exists a CGM $\mathcal{G}$ which controls it in the following sense: there exists a causal relation between available actions and consequences in the sense that any chosen action will stochastically \textit{cause} a consequence. The role of the CGM is to encode all of the causal relations present in the environment, not only between actions and consequences but also between any other variables in the environment. 

\begin{definition}{\label{causal_environment}}
A \textbf{Causal Environment} is a tuple $(\Omega, \mathcal{A},\mathcal{G},\mathcal{C},\mathcal{E})$ where\\ $(\Omega, \mathcal{A},\mathcal{C},\mathcal{E})$ is an uncertain environment and $\mathcal{G}$ is a CGM such that the set of uncertain events $\mathcal{E}$ correspond to the different realizations of the variables in $\mathcal{G}$ as well as the possible ways that the variables are related one with each other.
\end{definition}

\begin{definition}{\label{causal_decision_problem}}
We define a \textbf{Causal Decision Problem} (CDP) as a tuple $(\Omega, \mathcal{A}, \mathcal{G},\mathcal{E},\mathcal{C},\succeq)$, where $(\Omega, \mathcal{A}, \mathcal{G},\mathcal{E},\mathcal{C})$ is a Causal Environment and $\succeq$ is a preference relation defined over actions. 
\end{definition}

For the CGM in a CDP, we distinguish two particular variables: one corresponding to the available actions, and one corresponding to the produced (caused) outcome. We are considering that only one variable can be intervened upon and that the values of such variable represent the actions available to the decision maker; i.e., the value forced upon such variable under an intervention represents the action taken by the decision maker. 

The intuition behind the definition of a Causal Decision Problem is this: a decision maker chooses an action $a \in \mathcal{A}$, which is automatically fed into the model $\mathcal{G}$, which outputs the \textit{causal outcome} $c \in \mathcal{C}$. The definitions of a Bounded and Finite Causal Decision Problem extend in an analogous way from the classical definition. 

\section{Causal Decision Theory}{\label{causal_decision_making}}
Now that we have addressed basic notions of Classical Decision Making and Causation, we move on to Causal Decision Theory, where actions and outcomes are causally related. In this Section, we will recall some related and previous work done in Causal Decision Theory, with particular emphasis on decision-making results by Judea Pearl, in which causal information is assumed to be known to a decision-maker. Then, we further generalize this result to the case of \textit{unknown} causal information.

\subsection{Related Work}
\label{related_work}
According to \citeauthor{joyce1999foundations}'s formulation of Causal Decision Theory, a decision maker must choose whatever action is more likely to (causally) produce the desired outcome while keeping any beliefs about causal relations fixed \citep{joyce1999foundations}. This is resumed in Stalnaker's equation \citep{stalnaker1968}:
\begin{equation}
u(a)=\sum_{x} P(a \square \to x)u(x),
\end{equation}
where $a \square \to x$ is to be read as \textit{if the decision maker does} $a$ \textit{then} $x$ \textit{would be the case} \citep{gibbard1978counterfactuals,kleinberg2013causality}. Lewis' and Joyce's work captured the intuition that causal relations may be used to control the environment and to predict what is caused by the actions of a decision-maker. We will refine the $\square \to$ operator by an explicit way of calculating the probability of causing an outcome by doing a certain action in terms of Pearl's do-calculus. \citeauthor{heckerman1995decision} provides a framework for defining the notions of cause and effect in terms of decision theoretical concepts and gives a theoretical basis for a graphical description of causes and effects, such as the causal influence diagrams introduced by \citeauthor{dawid2002influence} \citep{dawid2002influence}. \citeauthor{heckerman1995decision} gave an elegant definition of causality but did not address how to actually make choices using causal information \citep{heckerman1995decision}. \citeauthor{dawid2012decision} presents a decision-theoretic approach to causal inference in which a decision maker must take into account how alternatives compare against each other in terms of the \textit{average causal effect}, the such approach uses the well-known influence diagrams \citep{dawid2002influence,dawid2003causal} in order to derive formulas that allow an explicit calculation of the average causal effect \citep{dawid2012decision}. Influence diagrams have the ability to express both intervention variables and chance variables in a single graphical structure. 
An optimality criterion for sequential interventions is obtained by \citeauthor{dawid2008identifying} by maximizing the expectation of outcomes \citep{dawid2008identifying}. 

On the other hand, J. Pearl proposes the following criterion for decision-making: Consider a rational decision-maker who faces a causal environment in which she knows the causal model controlling the relation between her actions and outcomes. She can use the known causal model in order to find the probabilities of \textit{causing} the desired outcome given she takes a certain action. The following theorem is found in Section 4.1 of \cite{pearl2009causality}, but the intuitions that lie behind can be traced back to \cite{lewis1981causal} and \cite{joyce1999foundations}.

\begin{theorem}[\cite{pearl2009causality}, Section 4.1]{\label{causal_ut}}
Let $G$ be a Causal Graphical Model, and its associated distribution $P_G$. Let $C$ be a set of consequences of interest for a decision-maker. If the decision maker faces a causal environment and if the causal graphical model $G$ is known, then the preference relation $\succeq$ satisfies the von Neumann-Morgenstern rationality axioms if and only if:
\begin{eqnarray*}
&a \succeq b&\\
& \textrm{ if and only if }&\\
 &\sum_{c \in C} P_G(c \mid do(a))u(c) \geq \sum_{c \in C} P_G(c \mid do(b))u(c).&
\end{eqnarray*} 
\end{theorem}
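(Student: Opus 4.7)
The plan is to reduce the causal setting to the classical von Neumann--Morgenstern framework by using the known causal graphical model $G$ to convert each action into a lottery over consequences, and then appeal directly to Theorem~\ref{vNM}. First, I would fix the known CGM $G$ and, for each action $a \in \mathcal{A}$, define the interventional distribution on consequences $L_a(c) := P_G(c \mid do(a))$. Because $G$ is known and the post-intervention distribution is Markov-compatible with the mutilated graph obtained by deleting the incoming edges into the action node (as required by the definition of a CGM in Section~\ref{causal_problems}), each $L_a$ is a well-defined probability distribution on $C$, and since $C$ is finite in a bounded, finite CDP, each $L_a$ has finite support. Thus the map $a \mapsto L_a$ embeds $\mathcal{A}$ into the set $L$ of finite-support lotteries on $C$.

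Second, I would transfer the preference relation from actions to induced lotteries. The rationality of the decision maker under a causal environment is by assumption formulated at the level of what actions causally produce, so one defines $L_a \succeq' L_b$ iff $a \succeq b$. Because two actions that generate the same interventional distribution are causally indistinguishable for the decision maker, $\succeq'$ is well-defined on the image of the embedding; extending it by convex combinations (mixtures of actions correspond to mixtures of interventional distributions) one obtains a preference relation on $L$. The hypothesis that $\succeq$ satisfies the von Neumann--Morgenstern rationality axioms in the causal environment should be read as $\succeq'$ satisfying the classical axioms (completeness, transitivity, continuity, independence) on lotteries, because each axiom only refers to probabilities of consequences, which in the causal setting are precisely the interventional probabilities supplied by $G$.

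Third, I would invoke Theorem~\ref{vNM} applied to $\succeq'$ on $L$: there exists $u: C \to \mathbb{R}$ with
\begin{equation*}
L_a \succeq' L_b \iff \sum_{c \in C} L_a(c)\, u(c) \geq \sum_{c \in C} L_b(c)\, u(c).
\end{equation*}
Substituting back $L_a(c) = P_G(c \mid do(a))$ and $L_b(c) = P_G(c \mid do(b))$ and translating $\succeq'$ to $\succeq$ yields the claimed biconditional. The converse direction is immediate: if $\succeq$ is represented by expected utility under the interventional distributions, the vNM axioms hold automatically because they hold for any expected-utility preference.

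The main obstacle I anticipate is the step of legitimating the identification of actions with induced lotteries and the lift of the axioms across that identification. One has to argue carefully that (i) the image of $\mathcal{A}$ under $a \mapsto L_a$ is rich enough (or can be convexified by permitting randomized actions) for the classical vNM axioms --- particularly continuity and independence --- to have bite, and (ii) the preference on actions in a causal environment genuinely depends only on the interventional distribution and not on non-causal aspects of the action. Both points reflect the manipulationist reading of causality adopted in Section~\ref{on_causality}: an action $a$ is nothing more, for decision-theoretic purposes, than the intervention $do(a)$ it realizes on $G$, which is exactly what makes the reduction go through.
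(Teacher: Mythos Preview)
Your reduction is correct and is essentially the same idea the paper uses: the paper does not give a formal proof of Theorem~\ref{causal_ut} at all, but attributes it to \cite{pearl2009causality} and justifies the ``causal von Neumann--Morgenstern'' label by the one-line observation that if the causal model $G$ is known then the decision maker knows the probabilities of outcomes given actions, which is exactly the classical vNM setting of Theorem~\ref{vNM}. Your proposal simply makes that reduction explicit by defining $L_a(c)=P_G(c\mid do(a))$, pushing $\succeq$ to lotteries, and invoking Theorem~\ref{vNM}; the obstacles you flag (well-definedness of $\succeq'$ and richness of the image under $a\mapsto L_a$) are real technical caveats, but the paper does not address them either and treats the result as inherited from Pearl.
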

Equivalently, the action that must be chosen is 
\begin{equation}
a^\ast = \textrm{argmax}_{a \in \mathcal{A}} \sum_{c \in C} P_G(c \mid do(a))u(c). 
\end{equation}
Further related work \cite{lewis1981causal,heckerman1995decision,joyce1999foundations,dawid2002influence, dawid2012decision,eells2016rational,stern2017interventionist}

\subsection{Main Result}{\label{main_result}}
We now consider \textbf{the case in which a rational decision maker does not know the causal model} which controls her environment. We must add \textbf{a new axiom} to the rationality axioms in order to keep simple our proof: \textbf{Choosing within a Causal Decision Problem corresponds to intervening a variable of the true causal model which controls the environment.}

Using Pearl's result, we saw how a rational decision maker can use a \textit{known} causal model in order to make a choice. Now, since the decision maker does not know the causal model, we argue that she must use \textit{beliefs} about which causal relations hold in her environment, and use them in order to make a choice. In this case, in which a Causal Graphical Model controls the relation between actions and outcomes, any subjective information about the environment must consider causal structures. For this reason, \textbf{we assert that the probability distribution that the decision maker has in mind is in fact a distribution over causal structures}, where the decision maker uses each structure \textit{as if} it were the true one in order to choose the best action within each structure by using Theorem \ref{causal_ut}. We assume a finite set of actions and outcomes.

\begin{theorem}[Main Result]
\label{causal_savage}
In a finite, bounded, Causal Decision Problem \\ $(\mathcal{A}, \mathcal{G},\mathcal{E},\mathcal{C},\succeq)$, where $\mathcal{G}$ is a Causal Graphical Model, we have that the preferences $\succeq$ of a decision maker are rational if and only if there exists a utility function, a probability distribution $P_C$ over a non-empty family $\mathcal{F}$ of causal graphical models such that for each $a,b \in \mathcal{A}$:
\begin{eqnarray*}
&a \succeq b&\\ 
&\textrm{ if and only if }&\\
&\sum_{c \in \mathcal{C}} u(c) \left( \sum_{g \in \mathcal{F}} P_g(c \mid do(a))P_C(g) \right)&\\
&\geq& \\
&\sum_{c \in \mathcal{C}}  u(c) \left( \sum_{g \in \mathcal{F}} P_g(c \mid do(b))P_C(g) \right)&
\end{eqnarray*}
where $P_g$ is the probability distribution associated with the causal structure $g$. 
\end{theorem}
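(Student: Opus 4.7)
The plan is to lift Savage's classical Theorem \ref{savage} to the causal setting by treating the family $\mathcal{F}$ of candidate causal graphical models as generating a sub-algebra of the uncertain events, and then applying Pearl's Theorem \ref{causal_ut} inside each atom of that partition. In a Causal Decision Problem the algebra $\mathcal{E}$ is built from realizations of the variables together with the causal relations among them (Definition \ref{causal_environment}), so the statement ``the true causal model is $g$'' is itself a measurable event. I will exploit this to decompose the subjective probability produced by Savage's Theorem, relying on the bounded and finite hypotheses to guarantee that Theorem \ref{savage} applies and that all sums below are finite.

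For the forward ($\Rightarrow$) direction, I would assume that $\succeq$ satisfies the Savage rationality axioms and apply Theorem \ref{savage} to obtain a utility $u:\mathcal{C}\to\mathbb{R}$ and a subjective probability $P$ on $\mathcal{E}$. Restricting $P$ to the sub-algebra generated by the events $\{\text{the true CGM is }g\}$ for $g\in\mathcal{F}$ yields a probability $P_C$ on $\mathcal{F}$. The added causal axiom (choice equals intervention on the true model) tells us that, conditionally on the event that $g$ is the true model, the probability of obtaining consequence $c$ when action $a$ is taken coincides with $P_g(c\mid do(a))$, which is the content of Theorem \ref{causal_ut} applied at the conditional level. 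Combining these via the law of total probability,
\begin{equation*}
P(c \mid do(a)) \;=\; \sum_{g \in \mathcal{F}} P_g(c \mid do(a))\, P_C(g),
\end{equation*}
and substituting into the Savage inequality $\mathbb{E}_P[u(a)] \geq \mathbb{E}_P[u(b)]$ produces the characterization in the statement.

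For the converse ($\Leftarrow$) direction, I would start from the representation displayed in the theorem and define $P(c \mid do(a)) := \sum_{g\in\mathcal{F}} P_g(c \mid do(a))\, P_C(g)$. Because each $P_g$ is a probability on $\mathcal{C}$ and $P_C$ is a probability on $\mathcal{F}$, this mixture is again a probability; together with $u$ it yields an expected-utility representation of $\succeq$ of exactly the form required by Theorem \ref{savage}. Verifying Savage's Axioms 1--5 then reduces to checking the usual closure properties (transitivity, consistency, comparability, existence of standard events, precise measurement) of the expected-utility order, which are inherited from the classical Savage construction since convex mixtures over $\mathcal{F}$ preserve them.

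The main obstacle I expect is the careful identification of $P_C$ as the restriction of the Savage subjective probability to a sub-algebra encoding the causal structure itself, rather than merely the variable realizations. This step requires making explicit that $\mathcal{E}$ contains events of the form ``the graphical structure is $g$,'' and that the newly-added intervention axiom is strong enough to equate Savage's conditional probability of a consequence given a model and an action with Pearl's interventional distribution $P_g(\cdot\mid do(a))$. Once this identification is granted, the remainder is essentially an application of the tower property combined with Theorems \ref{savage} and \ref{causal_ut}.
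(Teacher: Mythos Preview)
Your proposal is correct and follows essentially the same route as the paper: obtain $u$ and a subjective probability $P$ from Savage, split each uncertain event into a component specifying the graphical structure $g$ and a component specifying variable realizations plus the intervened variable, then use conditioning to identify $P(\cdot\mid g,\text{action})$ with $P_g(\cdot\mid do(a))$ and recombine via total probability. The only cosmetic difference is that the paper carries out this decomposition \emph{inside} the Bernardo--Smith construction of Savage's Theorem (working explicitly with the standard events $S_{ij}$ and the sets $A_i=\cup_j(E_{ij}\cap S_{ij})$, then writing $E_{ij}=\cup_k\{E'_{ij},g_{ijk},I\}$), whereas you invoke Theorem~\ref{savage} as a black box and decompose the resulting $P$ afterward; the key identification step and the use of the added intervention axiom are identical.
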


\begin{proof}
Following \citeauthor{bernardo2000bayesian} proof of the Expected Utility Principle (Proposition 2.22, p. 52, \cite{bernardo2000bayesian}), let $a_i = \{ c_{ij} \mid E_{ij}, j = 1 , ... , n \}$ be an option.\\
\\
The existence of a probability $P(\cdot)$, associated with uncertain events $E$, is given by Axiom 5(2). Also, the utility $u(c)$, is the real number $\mu(S)$, associated with any standard event $S$, such that 
\[ c \sim \{ c^\ast \mid S, c_\ast \mid S^c \}. \]
From the coherence Axioms 4(3) and 5(2), and Proposition 2.13 in \cite{bernardo2000bayesian}, there exists events $S_{ij}$ and $S'_{ij}$, such that 
\[ c \sim \{ S_{ij}', c_{\ast} \mid S_{ij}^{'c} \}, S_{ij} \indep E, P(S_{ij}')= P(S_{ij}), \]
Then, by Proposition 2.10 in \cite{bernardo2000bayesian}, $c_{ij} \sim \{ c^\ast \mid S_{ij}, c_{\ast} \mid S^c_{ij} \}$, where $P(S_{ij})=u(c_{ij})$.\\
Now, for any other option $a$ and $i=1,2$,
\[ \{[ c_{ij} \mid E_{ij}  ], j =1,...,n, a \} \sim \{ [(c^\ast \mid S_{ij},c_\ast \mid S^c_{ij}) \mid E_{ij}], j=1,...,n_i, a\}. \]
According to Definitions \ref{causal_environment} and \ref{causal_decision_problem}, and the new axiom added, we can split each event $E_{ij}$ into $\{E'_{ij},g_{ijk},I\}$, where
\[ E'_{ij} = \{ X_1 = x_1 , ... , X_n = x_n \}, \]
where $X_1 , ... , X_n$, are the variables of the CGM that controls the environment $\mathcal{G}$; and $g_{ijk}$, specifies a graphical structure over the set of variables $\{ X_1 , ... , X_n \}$; and $I$ is a pair of indices which indicates which variable has been intervened in order to affect which other variable.\\
\\
We can define $A_i$ as 
\[A_i = \cup_j (E_{ij} \cap S_{ij}),\]
Then, it is shown in \cite{bernardo2000bayesian} that events with this structure hold that:
\[ a_1 \succeq a_2 \Leftrightarrow A_1 \geq A_2 \Leftrightarrow P(A_1) \geq P(A_2).\]
And, we can express $P(E_{ij}\cap S_{ij})$ as:
\[P(E_{ij} \cap S_{ij})=P(E_{ij})P(S_{ij})\]
This means that,
\begin{eqnarray*}
 P(A_i ) &=& P(\cup_j (E_{ij} \cap S_{ij}))\\
         &=&    P(E_{ij})P(S_{ij})\\
         &=&     \sum_j u(c_{ij}) P(E_{ij}).
\end{eqnarray*}
We now use the fact that $E_{ij}=\cup_k \{E',g_{ijk},I\}$, to see that 
\begin{eqnarray*}
P(E_{ij})&=&P(E'_{ij} \cap (g_{ij1} \cup ... \cup g_{ijn_k}) \cap I)\\ 
&=& \sum_k P(E' \cap I \mid g_{ijk})P(g_{ijk}),
\end{eqnarray*}
Where the last term $P(g_{ijk})$, is supported within a family of causal models. Now, this last expression can be further expanded as:
\begin{eqnarray*}
\sum_k P(E'_{ij} \cap I \mid g_{ijk})P(g_{ijk}) &=& \sum_k P(\{X_1 , ... , X_n\} \cap I \mid g_{ijk})P(g_{ijk}) \\
&=& \sum_k P_g(c_{ij} \mid do(x_i))P(g_{ijk}).
\end{eqnarray*}
Therefore,
\begin{eqnarray*} 
P(A_i ) &=& \sum_j u(c_{ij}) P(E_{ij})\\ 
&=& \sum_j u(c_{ij}) \left( \sum_k P_g(c_{ij} \mid do(x_i))P(g_{ijk}) \right). 
\end{eqnarray*}
And the result follows, since $a_1 \succeq a_2 \Leftrightarrow P(A_1) \geq P(A_2).$ 
\end{proof}

\subsection{Interpretation}
Theorem \ref{causal_savage} asserts that a rational decision maker who faces a Causal Decision Problem with unknown causal information must use a probability distribution $P_C$ over a family $\mathcal{F}$ of causal structures, and, within each structure, $g \in \mathcal{F}$, use the term $P_g(c \mid do(a))$ in order to find the probability of obtaining a certain consequence given that the intervention $do(\cdot)$ is performed; in this way, the optimal action $a^\ast$ is given by:
\begin{equation}
a^\ast = \textrm{ argmax }_{a \in \mathcal{A}}  \sum_{c \in \mathcal{C}} u(c) \left( \sum_{g \in \mathcal{F}} P_g(c \mid do(a))P_C(g) \right). 
\end{equation}
We note that $a^\ast$ is obtained by taking into account the utility obtained by every possible consequence weighted using both the probability of causing such action within a specific causal model $g$ and the probability that the decision-maker assign to such $g \in \mathcal{F}$.

We are considering a \textit{normative} interpretation for Theorem \ref{causal_savage} according to which a decision maker must use any causal information in order to obtain the best possible action. Such action must be obtained by considering the \textit{beliefs} of the decision maker about the causal relations that hold in her environment (the distribution $P_C$), how such relations could produce the best action when considered \textit{as if} they were true (distribution $P_g$), and the satisfaction (utility $u$) produced by the consequences of actions.

It is known that humans tend to ignore pure probabilistic information over causal information \citep{tversky1980causal}, and are in fact able to learn, and use, causal models in sequential decision-making processes \citep{lagnado2007beyond,sloman2006causal,nichols2007decision,meder2010observing,hagmayer2013repeated,wellen2012learning}, even though such learning is not perfect \citep{rottman2014reasoning}. Therefore, this theorem provides the basis for a much stronger, and computationally implementable, framework for decision-making in which causal information is used over associative information, even though complete causal information may not be available to the decision-maker. 

\section{Application: Causal Games and Nash Equilibrium}
\label{causal_games}
In this section we study and develop an application of the previous result in the domain of Game Theory: we consider a \textit{strategic game} between $N$ rational players who are situated in a causal environment. A game is a model of a situation in which several players must take an action and afterward they will be affected both by the outcome of their own action as well as the actions of the other players \citep{osborne1994course}.

In a strategic game, it is assumed that no player knows the action taken by any other players; we also assume that the causal mechanism, which is represented by a Causal Graphical Model $\mathcal{G}$, remains fixed and it is unknown for each player. 
In this game, players ignore the actions taken by any other player, and since the causal model which controls the environment is unknown to every player, the players also ignore the information that players will use in order to take their respective actions: strategic games of this type are called \textit{Bayesian Games}, introduced by Harsanyi \citep{harsanyi1967games1,harsanyi1968games2,harsanyi1968games3}.

With this contribution, we expect to show that standard notions of game theory such as Nash Equilibrium can be extended to the case in which causal information is considered over associative information. Therefore, provide motivation to further extend classical results to use causal information as a basis.

In the games we will consider, the uncertainty of every player consists of two levels: on the first level, the true causal model $\mathcal{G}$; on the second level, what an action $do(a)$ causes if a certain Causal Graphical Model $\omega$ is considered to be the causal model. 
\begin{definition}
A Bayesian strategic game \citep{osborne1994course} consists of: A finite set $N$ of players. A finite set $\Omega$ of \textit{states of nature}. For each player, a nonempty set $A_i$ of actions. For each player, a finite set $T_i$ and a function $\tau_i : \Omega \mapsto T_i$ is the signal function of the player. For each player, a probability measure $p_i$ over $\Omega$ such that $p_i (\tau^{-1}_i (t_i))>0$ for all $t_i \in T_i$. A preference relation $\succeq_i$ defined over the set of probability measures over $A \times \Omega$ where $A= A_1 \times \cdots A_n$
\end{definition}

We consider $\Omega$ to be a family of admissible causal models; in this way,  $\omega \in \Omega$ being the true state of Nature fixes a causal model which controls the environment in which the players make their choices. 
In classical Bayesian games, once $\omega \in \Omega$ is realized as the true state, then each player receives a signal $t_i=\tau_i (\omega)$
and the posterior belief $p_i(\omega \mid \tau^{-1}_i (t_i) )$ given by 
$p_i(\omega) / p_i (\tau^{-1}_i (t_i))$
if $\omega \in \tau^{-1}_i (t_i)$. In the case of causal Bayesian games, we must consider both the probability $p_i$ of $\omega$ being the true state as well as the probability $p^\omega_i$ of observing a certain consequence when doing some action $a_i$ if $\omega$ is the true model. Following \cite{osborne1994course}, we define a new game $G^\ast$ in which its players are all of the possible combinations $(i, t_i) \in N \times T_i$, where the possible actions for $(i,T_i)$ is $A_i$. \cite{osborne1994course} show that for a fixed player $i \in N$, the posterior probability 
$p(\omega \mid \tau^{-1}_i (t_i))$
induces a lottery over the pairs 
$(a^\ast(j,\tau_j(\omega)))_j,\omega) \textrm{ for some other $j \in N$.} $
This lottery assigns to $(a^\ast(j,\tau_j(\omega)))_j,\omega)$ the probability
$p_i(\omega) / p_i (\tau^{-1}_i (t_i))$ if $\omega \in \tau^{-1}_i (t_i)$. The classical Bayesian game's Nash Equilibrium is the Nash equilibrium of $G^\ast$ \citep{osborne1994course}. Now, we consider the second level of uncertainty: the consequences caused by some action $a$ through a causal model $\omega \in \Omega$. We notice that the posterior probability itself induces a probability distribution defined over \textit{actions} for each player once a \textit{desired consequence} is fixed, this distribution, according to Theorem \ref{causal_savage} is given by 
$p^\omega_i (c \mid do(a^\ast_i), a^\ast_{-i}) p_i(\omega \mid \tau^{-1}_i (t_i)).$ This motivates the following definition of a \textit{Causal Nash equilibrium}.

\subsection{Causal Nash Equilibrium}
For each player $i \in N$ in the strategic game, we define the following probability distribution over consequences:
\begin{equation}
p^a_i (c) =  p^\omega_i (c \mid do(a_i), a_{-i}) p_i(\omega) \textrm{ for } a \in A=A_1 \times \cdots \times A_N, 
\end{equation}
where $p^\omega_i$ is the probability of causing a certain consequence within a causal structure $\omega$, and $p_i$ is the player's \textit{posterior beliefs} about the causal structure that controls the environment, and $do()$ is the well-known intervention operator from \citep{pearl2009causality}. We now define: 
\begin{equation}
u^C_i (a) = \sum_{c \in C}  u_i(c) p^a_i (c)  \textrm{ for }  a \in A=A_1 \times \cdots \times A_N.
\end{equation}
Notice that $u^C_i$ evaluates an action profile $a \in A$ in terms of: 
The knowledge each player has about the causal structure represented by $p_i$, which allows each player to evaluate the probability of causing outcomes in terms of actions by using the $do$ operator,
as well as the \textit{observed} actions that are taken by the other players, given by $a_{-i}$, and the preferences of each player $u_i$.
Using this new function, we define the equilibrium for a strategic game with causal information and Bayesian players as:
\begin{definition}
A Nash equilibrium for this \textit{causal strategic game} is an action profile $a^\ast \in A$ if and only if
\begin{equation}{\label{causal_nash}}
 u^C_i(a^\ast) \geq u^C_i(a_i, a^\ast_{-i}) \textrm{ for any other } a_i \in A_i. 
 \end{equation}
\end{definition}
This is, an action profile is a Nash equilibrium if and only if each player uses her current knowledge about the causal structure of the environment in order to (causally) produce the best possible outcome given the actions taken by the other players. The existence of the Causal Nash Equilibrium is guaranteed if every $A_i$ is a nonempty compact convex set in some $\mathbb{R}^n$ and if the preference relation induced by $u^C_i$ is continuous and quasi-concave as proved by \cite{osborne1994course}.

\section{Limitations}
\label{limitations}
We are working within the classical rationality assumption. Rationality can be ultimately thought of as a \textit{consistent} or coherent way of making choices, but the precise definition has been a subject of debate. See \cite{ellsberg1961risk,gilboa2009decision} and \cite{machina2014ambiguity} for critiques of the Savage Rationality Axioms. We have favored causal graphical models over other alternatives since it has been argued that several cognitive processes, such as causal reasoning, can be best represented as graphical models \citep{danks2014unifying,sloman2015causality,hagmayer2016causal}.

\section{Summary}
We have defined a Causal Decision Problem in terms of a classical Decision Problem under Uncertainty provided by a causal mechanism that mediates between actions and outcomes; this causal mechanism is assumed to be represented as a causal graphical model. In the case in which a rational decision maker knows such causal relations, we have seen that \citeauthor{pearl2009causality} has provided a decision-making result \cite{pearl2009causality}. 

On the other hand, when a decision maker does not know the causal mechanism, in Theorem \ref{causal_savage} we have provided preference representation result for causal decision making; our result explicitly states how a rational decision maker should use subjective beliefs, encoded as a probability distribution over causal models, as well as the causal inference machinery within the considered causal structures in order to find an optimal action.

With these two cases being covered, we have shown how causal models can provide the action-outcome probabilities for decision-making.

As an application, by using Theorem \ref{causal_savage} and taking as a basis Harsanyi's model of a Bayesian Game in which every player has incomplete information about both the actions taken by other players as well as the information that made each player take his action, we have been able to provide a definition of a Causal Nash Equilibrium (Equation \ref{causal_nash}) in which every player is aware that there exists a causal mechanism that will produce some consequence once he takes an action.

Finally, by extending a known result from Pearl to the case of \textit{unknown} causal information, we have shown how action-outcome probabilities for decision-making can be grounded in causal models. 

\section{Acknowledgements}
MGS thanks Dr. Gerardo Gonzalez Robert for his thoughts and comments.
\newpage
\appendix
\section{Coherence axioms \citep{bernardo2000bayesian}}
\begin{itemize}
\item Axiom 1: \textbf{Comparability of consequences and dichotomized options}: 
    \begin{enumerate}
        \item There exists consequences $c_1,c_2$ such that $c_2 \succ c_1$.
        \item For any consequences $c_1, c_2 \in \mathcal{C}$ and any events $E,F \in \mathcal{E}$, then either $\{c_2 \mid E, c_1 \mid E^c  \} \succeq  \{ c_2 \mid F , c_1 \mid F^c \}$ or $\{ c_2 \mid F , c_1 \mid F^c \}  \succeq \{c_2 \mid E, c_1 \mid E^c  \} $
    \end{enumerate}
\item Axiom 2: \textbf{Transitivity of preferences}:
    \begin{enumerate}
        \item $a \succeq a$.
        \item $a_2 \succeq a_1$, $a_3 \succeq a_2$, $\Rightarrow$ $a_3 \succeq a_1$.
    \end{enumerate}
\item Axiom 3: \textbf{Consistency of preferences}:
    \begin{enumerate}
        \item If $c_2 \succeq c_1$, then for any event $G$ more likely than $\emptyset$, $c_2 \succeq_G c_1$.
        \item If $c_2 \succ c_1$ and $ \{ c_2 \mid F, c_1 \mid F^c \} \succeq \{ c_2 \mid E, c_1 \mid E^c \}$, then $F \succeq E$.
    \end{enumerate}
\item Axiom 4: \textbf{Existence of standard events}: There exists a sub-algebra $\mathcal{S}$ of $\mathcal{E}$ and a function $\mu:\mathcal{S}\to [0,1]$ such that
    \begin{enumerate}
        \item $S_2 \succeq S_1$ if and only if $\mu(S_2) \geq \mu(S_1).$
        \item Disjoint $S_1$ and $S_2$ imply that $\mu(S_1 \cup S_2)= \mu(S_1) + \mu(S_2).$
        \item For any $\alpha \in [0,1]$, and events $E,F \in \mathcal{E}$, there exists a standard event $S \in \mathcal{S}$ such that $\mu(S)=\alpha$ and $E \indep S$ and $F \indep S.$
        \item For $S_1$ and $S_2$, independent standard events, we have that $\mu(S_1 \cap S_2) = \mu(S_1)\mu(S_2).$
        \item If $E \indep S$, $F \indep S$, and $E \indep F$, then $E \sim S \Rightarrow E \sim_F S$. 
    \end{enumerate}
\item Axiom 5: \textbf{Precise measurement}: 
    \begin{enumerate}
        \item If $c_2 \succeq c \succeq c_1$, there exists a standard event $S \in \mathcal{S}$ such that $c \sim \{ c_2 \mid S , c_1 \mid S^c \}$.
        \item For each event $E \in \mathcal{E}$, there exists a standard event $S \in \mathcal{S}$ such that $E \sim S$.
    \end{enumerate}
\end{itemize}

\bibliographystyle{apalike}
\bibliography{Bib.bib}
\end{document}